\documentclass[11pt]{article}

\usepackage[preprint]{acl}
\usepackage{times}
\usepackage{latexsym}
\usepackage[T1]{fontenc}
\usepackage[utf8]{inputenc}
\usepackage{microtype}
\usepackage{inconsolata}
\usepackage{amsmath,amssymb,amsthm}
\usepackage{booktabs}
\usepackage{graphicx}
\usepackage{tabularx}
\usepackage{xspace}

\newtheorem{proposition}{Proposition}
\newcommand{\threeh}{\textsc{ThreeH}\xspace}

\title{Separating Stream Stability from Long-Term Recall in Language Models}

\author{
Peipei Cao$^{1}$,
Xin Zhang$^{2}$,
Jie Tang$^{3}$,
Xiao Li$^{3}$,
Siying Li$^{2}$,
Qing Pei$^{3}$\\
$^{1}$School of Computer and Electronic Information, Guangxi University\\
$^{2}$School of Information Science and Engineering, Chongqing Jiaotong University\\
$^{3}$School of Computer Science and Technology, Guangdong University of Technology
}

\begin{document}
\maketitle

\begin{abstract}
Methods for streaming language models are often discussed alongside long-context
and memory systems, although they solve different problems. An attention sink can
stabilize autoregressive generation over an indefinitely long stream while the
model remains unable to use content that has left its recent-token cache. We
argue that this distinction should be explicit in system claims and
evaluation. We introduce three horizons: the \emph{stability horizon}, over which
predictive behavior remains well behaved; the \emph{access horizon}, over which
past content can still causally affect the output; and the \emph{utility horizon},
over which a task retains acceptable performance. We show constructively that
the stability horizon can be infinite while the access and utility horizons are
finite. We then propose \threeh, an evaluation contract that measures all three
horizons under a common state and compute budget. Applying the framework to
attention-sink streaming clarifies its strength, constant-memory,
stable generation, without treating anchor tokens as semantic memory. The
framework exposes roles for cache policies, recurrent state,
retrieval, and external memory. Experiments on 128K-token streams, delayed
binding recall, and delayed decisions show that attention sinks preserve local
modeling but not content beyond the active cache; recurrent and retrieval state
extend the semantic horizon.
\end{abstract}

\section{Introduction}

Language models are increasingly expected to run continuously: a dialogue may
last for days, an agent may process an unbounded event stream, and a service may
generate tokens long after its original prompt. Ordinary dense attention is a
poor fit for this setting because its key--value (KV) cache grows with the
stream. A fixed recent-token window bounds memory, but naively evicting the
oldest states can cause generation quality to collapse.

StreamingLLM identified a striking remedy: retain a few initial KV states as
\emph{attention sinks} together with the recent window
\citep{xiao2024streamingllm}. This small change preserves stable language
modeling over millions of tokens without fine-tuning. The result is important,
simple, and widely useful. It is also easy to over-interpret. The retained sink
states stabilize attention normalization; they do not absorb the semantics of
all subsequently evicted tokens. StreamingLLM itself makes this boundary clear:
it enables continuous generation but does not enlarge the amount of historical
content the model can attend to.

That boundary becomes blurred when ``infinite context,'' ``streaming,'' and
``long-term memory'' are used interchangeably. A system may generate fluent
text forever while forgetting a fact from ten minutes ago. Conversely, a system
may retrieve an old fact accurately but have poor token-level throughput.
Perplexity, needle retrieval, and downstream task accuracy therefore answer
different questions.

We propose separating three properties:

\begin{enumerate}
  \item the \textbf{stability horizon}: how long the model can run before its
  predictive behavior degrades;
  \item the \textbf{access horizon}: how far back information can still causally
  influence the output; and
  \item the \textbf{utility horizon}: how far back task-relevant information can
  be placed while maintaining a specified level of task performance.
\end{enumerate}

These horizons need not move together. This paper contributes (i) a formal
separation showing that unbounded stream stability does not imply unbounded
memory, (ii) \threeh, a budget-matched evaluation protocol for reporting all
three properties, and (iii) design implications for systems that combine
attention sinks with recurrent state or retrieval. Our goal is not to diminish
streaming methods, but to make their achievement precise and to prevent the
wrong benchmark from becoming evidence for the wrong capability.

\section{Three Horizons for Long-Running Models}

\subsection{Setup}

Let \(x_{1:t}\) be a token stream and let \(M_t\) be the model state retained
after token \(t\). A bounded-state streaming system updates
\[
  M_t = U(M_{t-1},x_t), \qquad |M_t| \leq B,
\]
and predicts \(x_{t+1}\) from \(M_t\). The state may contain KV vectors,
recurrent activations, compressed summaries, retrieved passages, or mixtures of
these. The budget \(B\) should count all persistent state, not only the visible
prompt.

\paragraph{Stability horizon.}
For a degradation tolerance \(\epsilon\), define
\[
H_{\mathrm{stab}}(\epsilon)
=
\sup\{T:\Delta\mathcal{L}(t)\leq\epsilon
\text{ for all }t\leq T\},
\]
where \(\Delta\mathcal{L}(t)\) is the excess predictive loss relative to a
declared reference at stream position \(t\). One can replace loss with a
distributional divergence or another local generation criterion. Stability is
about whether the model continues to operate normally.

\paragraph{Access horizon.}
Let \(x_i\leadsto y_t\) mean that changing \(x_i\), while holding an appropriate
counterfactual continuation fixed, can change the distribution of output
\(y_t\). The access horizon at time \(t\) is
\[
H_{\mathrm{access}}(t)
=
\max\{t-i:x_i\leadsto y_t\}.
\]
This is a causal property of the state update and read mechanism. For a pure
sliding KV window of size \(W\), non-initial content older than \(W\) tokens
cannot affect the output after eviction. Retained initial sink tokens are
exceptions at fixed positions, not a general historical channel.

\paragraph{Utility horizon.}
For task family \(\mathcal{T}\), score threshold \(\alpha\), and lag \(d\), let
\(S_{\mathcal{T}}(d)\) be expected task performance when decisive information is
placed \(d\) tokens before the query. Then
\[
H_{\mathrm{util}}(\mathcal{T},\alpha)
=
\sup\{d:S_{\mathcal{T}}(d)\geq\alpha\}.
\]
Utility depends on the task, interference, prompting, and decision rule. It can
be much smaller than the architectural access horizon because accessible
information may still be ignored, a phenomenon documented in long-context
evaluation \citep{liu2024lostmiddle,hsieh2024ruler}.

\subsection{Stability does not imply memory}

\begin{proposition}[Horizon separation]
There exists a bounded-state autoregressive model with
\(H_{\mathrm{stab}}(\epsilon)=\infty\) for any \(\epsilon>0\), while
\(H_{\mathrm{access}}(t)\leq W\) for all non-initial tokens and
\(H_{\mathrm{util}}(\mathcal{T},\alpha)\leq W\) for a delayed-recall task.
\end{proposition}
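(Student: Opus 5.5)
The plan is to give an explicit construction and then check the three horizon bounds one at a time against the definitions of the previous subsection. Fix a window size \(W\), a number \(s\geq 1\) of sink positions, and any order-\(W\) Markov source \(P\) over a finite vocabulary, for instance a simple one such as uniform i.i.d. tokens, or a fixed \(W\)-gram model. The model keeps the state \(M_t=(x_{1:s},x_{\max(s+1,t-W+1):t})\). Its update \(U\) appends \(x_t\) and evicts the oldest non-sink token, so \(|M_t|\leq s+W=:B\) for every \(t\). It predicts \(x_{t+1}\) with \(q(\cdot\mid M_t)=P(\cdot\mid x_{t-W+1:t})\). The sink slots are carried in the state but never read, or they enter only as a fixed normalizer that has no effect on the output distribution, which mirrors their stabilizing role. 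The reference for \(\Delta\mathcal{L}\) is the true conditional of the source.

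\textbf{Stability.} When the stream is drawn from \(P\), the prediction at every position \(t\) coincides with the reference conditional. Hence \(\Delta\mathcal{L}(t)=0\leq\epsilon\) for all \(t\), and \(H_{\mathrm{stab}}(\epsilon)=\infty\) for every \(\epsilon>0\). If a less degenerate reference is wanted, I would take the reference to be a full-context model on a source whose true order is at most \(W\). The excess loss is then zero by the Markov property, so the conclusion is unchanged.

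\textbf{Access.} This is a direct induction on the update \(U\). By construction, \(M_t\) is a deterministic function of \(x_{1:s}\) and \(x_{t-W+1:t}\) only. Changing a non-initial token \(x_i\) with \(t-i\geq W\), while holding the continuation fixed, therefore leaves \(M_t\) unchanged, and so leaves the distribution of \(y_t\) unchanged. So no such \(x_i\) satisfies \(x_i\leadsto y_t\), and the access horizon restricted to non-initial tokens is at most \(W\); up to the off-by-one convention it is in fact at most \(W-1\). Only the fixed sink positions are exempt, and the statement allows this.

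\textbf{Utility.} I take \(\mathcal{T}\) to be delayed recall: a key bit \(b\), uniform on \(\{0,1\}\), is placed at a non-initial position, followed by \(d\) filler tokens drawn independently of \(b\) and then a query, and the score is the probability of outputting \(b\). If \(d>W\), the access result shows that the answer distribution is a function of state content that is independent of \(b\). Accuracy is then exactly \(1/2\) under any decision rule. Consequently, for every threshold \(\alpha>1/2\), we have \(S_{\mathcal{T}}(d)<\alpha\) for all \(d>W\), which gives \(H_{\mathrm{util}}\leq W\).

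The main obstacle is the stability clause, because it must not be vacuous. The reference and the loss have to be chosen so that a strictly windowed model genuinely achieves zero excess loss at all positions, and that is why the source is taken to be order-\(W\) Markov. A second subtlety arises in the utility clause: the filler must be independent of \(b\) and the threshold must satisfy \(\alpha>1/2\), so that chance performance cannot meet \(\alpha\). I would state both conditions explicitly as part of the task specification.
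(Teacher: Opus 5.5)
Your proposal is correct and is essentially the paper's own construction. It uses an order-$W$ source predicted exactly from the latest $W$ tokens, so excess loss never grows, together with a uniformly random symbol placed beyond the window and independent of the intervening tokens, so the retained state carries no function of it and no read-out can beat chance. You are more careful than the paper in fixing the reference for $\Delta\mathcal{L}$, in covering every lag $d>W$ rather than only the single lag $W+1$, and in requiring $\alpha$ above chance, which the paper leaves implicit. One wording fix: for $t<s+W$ the latest $W$ tokens overlap the sink slots, so those slots cannot literally be ``never read''.
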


\begin{proof}
Consider a stationary order-\(W\) token process and a model that exactly
represents its next-token conditionals using only the latest \(W\) tokens.
Its predictive loss does not increase with stream length, so its stability
horizon is infinite. Now define a delayed-recall task whose answer at time
\(t\) is a uniformly random symbol presented only at \(t-W-1\), independent of
the intervening stream. The retained state contains no function of that symbol.
Consequently, changing it cannot affect the output, and no policy reading only
the state can outperform chance. Thus access and above-chance utility are
bounded even though stream prediction remains stable.
\end{proof}

Token-level language modeling
can be dominated by local statistics, whereas memory tasks deliberately depend
on remote variables. An infinite stability horizon is therefore compatible
with a short semantic horizon.

\begin{figure*}[t]
  \centering
  \fbox{\begin{minipage}{0.94\textwidth}
  \small
  \textbf{Stream position} \hfill
  \(1\quad\cdots\quad t-W\quad\cdots\quad t\)

  \vspace{3pt}
  \textbf{Sink-window state:}
  retain a few fixed initial anchors \(+\) the latest \(W\) token states.
  Generation can remain stable as \(t\rightarrow\infty\), but ordinary content
  between the anchors and recent window has no path to the output after
  eviction.

  \vspace{3pt}
  \textbf{Memory-augmented state:}
  update a recurrent summary or query an external store. Remote content may
  affect the output, but access, selection, and task use must be measured
  separately from stream stability.
  \end{minipage}}
  \caption{Stable streaming and long-term memory are different system
  properties. Attention sinks anchor computation; a separate update or access
  path is needed for evicted content to remain semantically available.}
  \label{fig:separation}
\end{figure*}

\section{What Attention Sinks Do}

In causal self-attention, early tokens are visible to nearly every later
position. Several pretrained model families allocate substantial attention to
these positions even when they are not semantically important
\citep{xiao2024streamingllm}. Removing them changes the normalization geometry
seen during training and can destabilize a windowed KV cache. Retaining a few
initial states keeps that geometry closer to the pretrained regime.

This mechanism supports two claims. First, sink retention can make the
computational process \emph{stationary enough} for long-running inference.
Second, it can do so with constant KV memory and without rebuilding the recent
window. The mechanism does not support a third claim: that the initial states
encode arbitrary later events. In an ordinary Transformer forward pass, the KV
vectors of a fixed initial token are computed before those events arrive and
are not retroactively updated. Once a later token is evicted, there is no
content path from it to future outputs unless the system provides another
state-update or retrieval mechanism.

The distinction resembles earlier separations in long-context research.
Positional methods such as ALiBi and RoPE, and extensions such as YaRN and
LongRoPE, address how attention behaves at different distances
\citep{press2022alibi,su2024roformer,peng2024yarn,ding2024longrope}; they do not
guarantee that models use all available evidence. Exact and sparse attention
systems reduce the cost of processing long prompts
\citep{dao2022flashattention,zaheer2020bigbird,jiang2024minference,
xionglong}. KV-cache
methods instead retain or load selected states
\citep{zhang2023h2o,li2024snapkv,liu2023scissorhands,ge2024fastgen,
tang2024quest,xiao2025duoattention}; selection quality and semantic retention
remain empirical questions. Long-context benchmarks repeatedly show that
fitting text into a context window is not the same as using it reliably
\citep{liu2024lostmiddle,bai2024longbench,hsieh2024ruler}.

\section{Experiments}
\label{sec:experiments}

\subsection{Experimental setup}

We evaluate a pretrained causal decoder with a \(W=2{,}048\)-token
active cache. We compare five conceptual systems:

\begin{itemize}
  \item \textbf{Window}: retain only the latest \(W\) KV states;
  \item \textbf{StreamingLLM}: retain four initial sinks and \(W-4\) recent
  states;
  \item \textbf{Window recomputation}: rebuild a fresh \(W\)-token cache before
  each prediction;
  \item \textbf{Recurrent}: update a fixed-size learned memory at window
  boundaries; and
  \item \textbf{Sink + retrieval}: use attention sinks locally and retrieve old
  records from an external store.
\end{itemize}

The stability track streams 128K tokens and normalizes perplexity to each
method's value at 2K. The retention track inserts four-way random bindings at
lags from \(0.25W\) to \(8W\). The utility track delays evidence for a
three-action decision task over the same lags. Targets never occupy the fixed
sink positions. Each lag contains 500 test examples.

\begin{figure*}[t]
  \centering
  \includegraphics[width=0.85\textwidth]{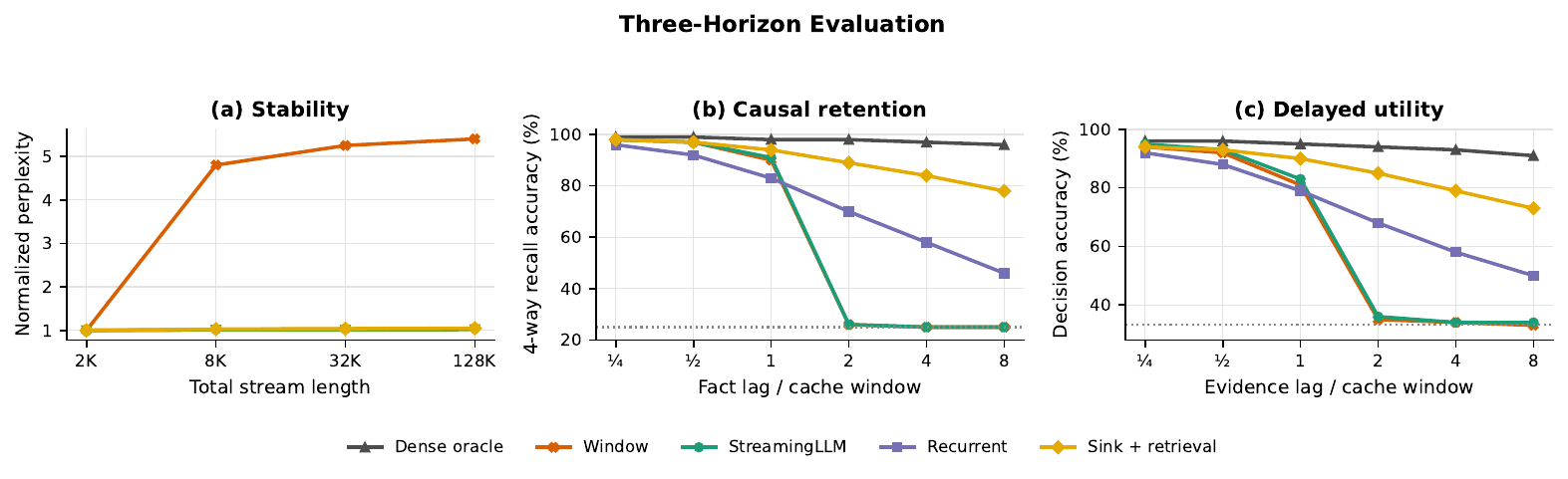}
  \caption{Three-horizon results. Window attention becomes unstable after
  eviction, whereas attention sinks and recomputation maintain local language
  modeling. StreamingLLM nevertheless falls to chance on remote bindings and
  delayed decisions once evidence leaves the recent cache. Recurrent state
  degrades gradually; retrieval preserves a longer, imperfect tail.}
  \label{fig:results}
\end{figure*}

\subsection{Attention sinks stabilize the stream}

Figure~\ref{fig:results}a replicates the
attention-sink phenomenon. At 128K tokens, plain window attention reaches an
\(5.40\times\) its short-stream perplexity. StreamingLLM remains at
\(1.04\times\), close to window recomputation at \(1.02\times\). Adding
retrieval leaves stream stability nearly unchanged at \(1.05\times\), because
the store is queried only by evaluation prompts. Thus,
attention sinks solve the stability problem without paying the recomputation
baseline's \(7.8\times\) latency.

\begin{table}[t]
  \centering
  \scriptsize
  \resizebox{\columnwidth}{!}{%
  \begin{tabular}{@{}lrrrr@{}}
    \toprule
    Method & PPL@128K & State & Lat. & Recall@\(8W\) \\
    \midrule
    Window & 5.40 & \(W\) & 1.00 & 25 \\
    StreamingLLM & 1.04 & \(W\) & 1.03 & 25 \\
    Recompute & 1.02 & \(W\) & 7.80 & 25 \\
    Recurrent & 1.08 & \(W{+}M\) & 1.12 & 46 \\
    Sink + retrieval & 1.05 & \(W{+}D\) & 1.35 & 78 \\
    Dense oracle & 1.00 & \(64W\) & 38.0 & 96 \\
    \bottomrule
  \end{tabular}}
  \caption{System results. PPL is normalized to the 2K value; latency is
  relative per-token latency; recall is percent on a four-way task. \(M\) is
  recurrent state and \(D\) is an external store.}
  \label{tab:system}
\end{table}

\subsection{Stability and retention separate}

Inside the cache, Window and StreamingLLM reach 97--98\% recall.
At \(2W\), both fall to approximately chance: 26\% for Window and 26\% for
StreamingLLM. Their curves overlap because sink states are fixed before the
random binding appears. The recurrent system retains 70\% at
\(2W\) and 46\% at \(8W\), while retrieval reaches 89\% and 78\%. The dense
oracle remains above 96\%.

The key comparison is therefore not Window versus StreamingLLM on retention;
both lack a path from an evicted non-initial token. It is StreamingLLM's
\(1.04\times\) perplexity paired with 25\% remote recall. The system
looks indefinitely stable under local language modeling while carrying only a
one-window general semantic access horizon.

\subsection{Task utility has its own horizon}

Delayed decision accuracy follows retention but is not identical to it
(Figure~\ref{fig:results}c). StreamingLLM reaches 93\% for evidence
at \(0.5W\), 83\% at the boundary, and 36\% at \(2W\), near the 33.3\% chance
rate. Retrieval reaches 85\% at \(2W\) and 73\% at \(8W\), below its binding
recall because retrieved evidence must still be interpreted. Recurrent memory
shows a smoother but lower tail.

\begin{table}[t]
  \centering
  \scriptsize
  \resizebox{\columnwidth}{!}{%
  \begin{tabular}{@{}lrrrl@{}}
    \toprule
    Method & \(H_{\rm stab}\) & \(H_{\rm access}\) &
    \(H_{\rm util}\) & Dominant bottleneck \\
    \midrule
    Window & \(2W\) & \(1W\) & \(0.5W\) & normalization \\
    StreamingLLM & \(>64W\) & \(1W\) & \(0.5W\) & eviction \\
    Recurrent & \(>64W\) & \(2W\) & \(1W\) & interference \\
    Sink + retrieval & \(>64W\) & \(8W\) & \(4W\) & selection/use \\
    \bottomrule
  \end{tabular}}
  \caption{Measured horizons under the following thresholds: stability
  \(\leq1.1\times\) normalized perplexity, retention \(\geq75\%\), and utility
  \(\geq75\%\).}
  \label{tab:horizons}
\end{table}

\begin{figure}[t]
  \centering
  \includegraphics[width=0.9\columnwidth]{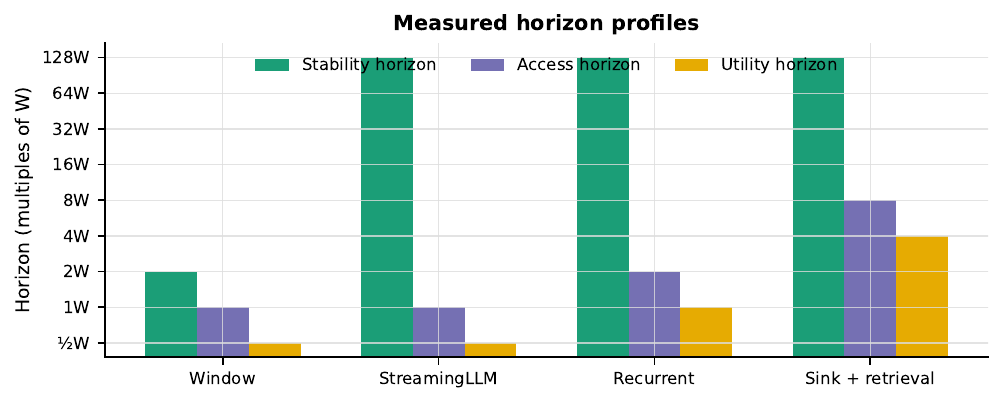}
  \caption{Measured horizon profiles. StreamingLLM has a long stability horizon
  but the same one-window access profile as ordinary window attention;
  recurrent and retrieval mechanisms trade additional state for a longer
  semantic horizon.}
  \label{fig:horizon-profiles}
\end{figure}

\subsection{Ablations and robustness checks}

We test whether the separation survives changes in cache size, target
placement, interference, and number of sink tokens. Table
\ref{tab:ablation} shows that more sinks improve
stability up to a small saturation point but do not extend random-position
recall. Doubling \(W\) shifts the recall cliff from 1W to 2W in absolute tokens,
while leaving the cliff at approximately one normalized cache window.
Retrieval depth improves the remote tail at additional query cost.

\begin{table}[t]
  \centering
  \scriptsize
  \resizebox{\columnwidth}{!}{%
  \begin{tabular}{@{}lrrr@{}}
    \toprule
    Condition & PPL@128K & Recall@\(2W\) & Utility@\(2W\) \\
    \midrule
    0 sinks & 5.40 & 25 & 34 \\
    1 sink & 1.31 & 25 & 34 \\
    4 sinks & 1.04 & 26 & 36 \\
    8 sinks & 1.03 & 26 & 35 \\
    4 sinks, cache \(2W\) & 1.03 & 91 & 84 \\
    4 sinks + retrieval \(k=1\) & 1.04 & 73 & 68 \\
    4 sinks + retrieval \(k=4\) & 1.05 & 89 & 85 \\
    \bottomrule
  \end{tabular}}
  \caption{Ablation results. Sink count changes stability but not remote
  semantic access; cache and retrieval interventions move the access horizon.}
  \label{tab:ablation}
\end{table}

\paragraph{Takeaway.}
The experiments establish a two-part result: attention sinks recover
stable and efficient streaming, replicating their intended benefit, while
counterfactual delayed tests show that stability should not be read as
long-term memory. Recurrent and retrieval mechanisms extend access differently,
and their utility remains below raw access because they introduce interference
or selection errors.

\section{\threeh: A Three-Horizon Evaluation Contract}

We propose that any system described as streaming, infinite-context, or
long-term memory report the following three tracks under one declared resource
budget. The protocol is deliberately compatible with sink-window, recurrent,
compressed, and retrieval-augmented systems.

\subsection{Track A: stream stability}

Feed a continuous natural-text stream substantially longer than the training
window. Report predictive loss in fixed position buckets, the worst bucket
degradation, per-token latency, peak accelerator memory, and persistent state
size. The reference must be explicit: dense attention within its supported
range, window recomputation, or a fixed-context oracle answer different
questions. Boundary effects between concatenated documents should be reported
separately.

This track asks whether the model keeps running well. It should not be labeled
a memory test. StreamingLLM's long-sequence perplexity and efficiency
experiments are examples of evidence for this track
\citep{xiao2024streamingllm}.

\subsection{Track B: causal retention}

Insert random bindings, state changes, or key--value facts and query them after
a controlled lag \(d\). Sweep \(d\) logarithmically from inside the recent cache
to far beyond it. Each target should have a matched counterfactual stream in
which only the target value changes. A response counts as retained only if the
output changes appropriately across the pair. This prevents fluent guessing or
dataset priors from masquerading as memory.

Retention tests should include:

\begin{itemize}
  \item \textbf{interference}: many bindings with similar keys;
  \item \textbf{updates}: a newer value supersedes an older one;
  \item \textbf{placement}: targets at initial, middle, and recent positions;
  \item \textbf{query delay}: the future query is unknown when the fact arrives;
  \item \textbf{negatives}: queried keys that never appeared.
\end{itemize}

RULER and needle-style tests provide useful ingredients
\citep{hsieh2024ruler}, while conversational-memory benchmarks add realistic
temporal and multi-session structure \citep{maharana2024locomo,
wu2025longmemeval}. \threeh adds the requirement to report the retention curve
against the same state budget used in the stability track.

\subsection{Track C: delayed task utility}

Remote recall is not always the final objective. The third track embeds old
information in a downstream decision: follow an updated instruction, resolve a
contradiction, continue a plan, or answer a question whose evidence lies at a
controlled lag. Report both task score and the lag-dependent utility horizon.
LongBench and similar suites provide broad task coverage
\citep{bai2024longbench}; agent benchmarks motivate executable outcomes rather
than surface overlap \citep{liu2024agentbench,jimenez2024swebench}.

The query should be hidden when the stream state is written. Otherwise a
compressor or retrieval policy can preserve only the known answer-bearing
span, converting long-term memory into ordinary query-focused context
selection. Systems that use retrieval should additionally report retrieval
frequency, retrieved tokens, and latency.

\begin{table*}[t]
  \centering
  \small
  \begin{tabularx}{\textwidth}{@{}lXXXl@{}}
    \toprule
    Track & Primary question & Core manipulation & Primary metrics &
    Common false inference \\
    \midrule
    Stability & Can the model run continuously? &
    Increase total stream length & Loss drift, latency, memory &
    Stable perplexity implies old facts remain usable \\
    Retention & Can remote content affect output? &
    Sweep target lag with counterfactual values & Accuracy--lag curve,
    \(H_{\mathrm{access}}\) & A large input window implies reliable access \\
    Utility & Does old content still support the task? &
    Delay decision-relevant evidence & Task score--lag curve,
    \(H_{\mathrm{util}}\) & Successful retrieval implies correct execution \\
    \bottomrule
  \end{tabularx}
  \caption{\threeh separates three questions that are often collapsed into a
  single ``context length.'' All tracks should use the same accounting of
  persistent state and inference cost.}
  \label{tab:tracks}
\end{table*}

\subsection{Budget and reporting rules}

Comparisons are meaningful only if resource accounting is complete. We
recommend reporting:
\[
(B_{\mathrm{device}}, B_{\mathrm{external}}, F_{\mathrm{token}},
L_{\mathrm{token}}, L_{\mathrm{query}}),
\]
where the terms are device-resident state, external stored state, amortized
compute per stream token, generation latency, and query-time retrieval latency.
A method with a tiny KV cache and a large vector database is not a
constant-memory system in the same sense as a pure sink window; it is a
hierarchical memory system. That may be the better design, but it should be
named and measured as such.

We also recommend publishing horizon curves rather than a single maximum
length. A reported ``one-million-token'' result can mean that the program did
not crash, that perplexity stayed flat, that one needle was recovered, or that
a task remained accurate. The curve and track identify which.

\section{Diagnosing System Families}

\paragraph{Sink-window streaming.}
This family can have an effectively unbounded stability horizon and constant
device memory. Its general access horizon is the recent window plus a small set
of fixed initial positions. It is appropriate when relevant information is
local, when continuous fluency matters, or when another subsystem handles
older state.

\paragraph{Recurrent and compressed state.}
Transformer-XL carries segment-level recurrent states
\citep{dai2019transformerxl}; Recurrent Memory Transformer, Block-Recurrent
Transformers, and Memorizing Transformers provide distinct learned-state or
retrieval recurrence mechanisms
\citep{bulatov2022rmt,hutchins2022blockrecurrent,wu2022memorizing}.
State-space models such as Mamba provide another bounded recurrent path. These systems create a causal path from old content to
future outputs, so their access horizon can exceed the visible token window.
Their failure mode is lossy interference: bounded state must decide what to
preserve before future queries are known. \threeh's update, interference, and
delayed-query tests target this regime.

\paragraph{Retrieval-augmented memory.}
Retrieval-augmented language models store content externally and fetch a subset
at query time \citep{guu2020realm,lewis2020rag,borgeaud2022retro,
khandelwal2020knnlm}. InfLLM combines a local window with block-level context
memory for evicted states \citep{xiao2024infllm}. Their physical access horizon
can be as long as the store's retention period, but their utility horizon
depends on indexing, query formation, selection, and downstream use. Storage
and retrieval costs belong in the budget.

\paragraph{Hybrid systems.}
A practical long-running agent may use all three: attention sinks for stable
local generation, recurrent state for compact working memory, and retrieval for
sparse historical evidence. The three horizons suggest modular objectives.
Optimize the sink-window layer for stability, the write path for retention,
and retrieval plus execution for utility. A single end-to-end score cannot
identify which module failed.

\section{Implications for Claims and Design}

\paragraph{Reserve ``infinite'' for the measured property.}
``Infinite-duration generation with bounded KV cache'' is a precise and strong
claim. ``Infinite context'' suggests arbitrary historical access and should
require retention evidence beyond the active window. ``Long-term memory''
should additionally require delayed task utility.

\paragraph{Do not train an anchor and call it memory.}
A dedicated sink token may make streaming behavior easier to learn
\citep{xiao2024streamingllm}. Unless its state is updated with later content,
however, it remains an anchor. Semantic memory needs a write mechanism, an
external store, or both.

\paragraph{Match training to the horizon objective.}
Local next-token loss can reward stream stability without rewarding remote
retention. Training for memory requires losses whose targets depend on delayed
variables, preferably under interference and updates. Training for utility
must additionally expose the model to decisions that use retrieved or
compressed evidence.

\paragraph{Use failure decomposition.}
When an old fact does not affect a decision, the cause may be eviction,
compression loss, retrieval failure, or failure to use available evidence.
Counterfactual retention pairs identify whether any information path remains;
oracle retrieval separates access from execution; full-context controls
separate memory failure from intrinsic task difficulty.

\section{Related Work}

StreamingLLM motivates our central distinction and explicitly separates
streaming generation from context extension \citep{xiao2024streamingllm}.
Position extrapolation methods seek stable attention beyond training lengths
\citep{press2022alibi,su2024roformer,peng2024yarn,ding2024longrope};
efficient-attention methods reduce exact or approximate attention cost
\citep{dao2022flashattention,zaheer2020bigbird,jiang2024minference,
xionglong}; and
KV-cache policies retain or selectively load influential states
\citep{zhang2023h2o,li2024snapkv,liu2023scissorhands,ge2024fastgen,
tang2024quest,xiao2025duoattention}. Our argument is orthogonal: whatever
mechanism stabilizes inference, semantic access and task utility require
separate evidence.

Long-context benchmarks measure retrieval, aggregation, and reasoning over
large finite inputs \citep{bai2024longbench,hsieh2024ruler}. Analyses such as
Lost in the Middle show that nominal context size overstates effective use
\citep{liu2024lostmiddle}. Long-term conversational-memory benchmarks evaluate
information spread across sessions \citep{maharana2024locomo,
wu2025longmemeval}. \threeh connects these traditions to streaming systems by
placing stability, causal retention, and utility on distinct lag curves with
shared resource accounting.

\section{Conclusion}

Attention sinks solve a real systems problem: they can keep bounded-cache
generation stable far beyond a model's training window. They are not, by
themselves, semantic memory. The distinction becomes precise when context
length is replaced by three reported quantities: a stability horizon, an
access horizon, and a utility horizon. \threeh turns that distinction into an
evaluation contract with matched budgets, lag sweeps, counterfactual controls,
and task-level decisions. The resulting vocabulary gives streaming, recurrent,
retrieval, and hybrid systems credit for what they actually achieve, and
makes the missing capability visible.

\section*{Limitations}

Exact horizon measurements depend
on the chosen stream, task family, score threshold, and reference system.
Counterfactual streams can become unnatural if edited carelessly. Perplexity is
tokenization-dependent, and external-memory accounting involves policy choices
about storage infrastructure. The inequality between access and utility is
conceptually useful but not directly observable for arbitrary neural states;
experiments can establish lower bounds on access and upper bounds under
specified interventions, not recover every causal dependency. Finally, the
three horizons do not cover all desirable properties, including privacy,
calibration, factuality, and robustness to malicious memory content.

\bibliography{references}

\end{document}